
\documentclass[letterpaper, 10 pt, conference]{ieeeconf}  
\IEEEoverridecommandlockouts                              

\overrideIEEEmargins                                      

\usepackage{verbatim}
\usepackage{amssymb}
\usepackage{mathtools}
\usepackage{nth}
\usepackage[vlined,ruled]{algorithm2e}
\usepackage{enumerate}

\makeatletter
\let\NAT@parse\undefined
\makeatother

\usepackage[square, sort&compress, numbers]{natbib}

\usepackage{amsmath,graphicx,epsfig,color,amsfonts, subfigure}

\graphicspath{{./fig/}}

\def\expt{\mathbb{E}}
\def\real{\mathbb{R}}
\def\integer{\mathbb{Z}}
\def\natural{\mathbb{N}}
\newcommand{\until}[1]{\{1,\dots, #1\}}

\newcommand{\setdef}[2]{\{#1 \; | \; #2\}}
\newcommand{\seqdef}[2]{\{#1\}_{#2}}

\newcommand{\union}{\operatorname{\cup}}

\DeclareMathOperator*{\argmax}{arg\,max}
\DeclareMathOperator*{\argmin}{arg\,min}

\newcommand\oprocendsymbol{\hbox{$\square$}}
\newcommand\oprocend{\relax\ifmmode\else\unskip\hfill\fi\oprocendsymbol}

\newcommand\bit[1]{\textit{\textbf{#1}}}

\def \bs {\boldsymbol}

\def \etal {\emph{et al.}}

\newtheorem{theorem}{Theorem}

\newtheorem{lemma}[theorem]{Lemma}

\newtheorem{remark}{Remark}




\title{On Distributed Multi-player Multiarmed Bandit Problems\\ in Abruptly Changing Environment 
\thanks{This work has been supported by NSF Award IIS-1734272.}
}

\author{Lai Wei \hspace{1in} Vaibhav Srivastava
\thanks{L. Wei and V. Srivastava are with the Department of Electrical and Computer Engineering. Michigan State University, East Lansing, MI 48823 USA.
        {\tt\small e-mail: weilai1@msu.edu; e-mail: vaibhav@egr.msu.edu }}%
}

\begin{document}

\maketitle
\thispagestyle{empty}
\pagestyle{empty}

\begin{abstract}
We study the multi-player stochastic multiarmed bandit (MAB) problem in an abruptly changing environment. We consider a collision model in which a player receives reward at an arm if it is the only player to select the arm. We design two novel algorithms, namely, Round-Robin Sliding-Window Upper Confidence Bound\# (RR-SW-UCB\#), and the Sliding-Window Distributed Learning with Prioritization (SW-DLP).   
We  rigorously analyze these algorithms and show that the expected cumulative group regret for these algorithms  is upper bounded by sublinear functions of time, i.e., the time average of the regret asymptotically converges to zero. We complement our analytic results with numerical illustrations. 
\end{abstract}

\section{Introduction}
Achieving coordinated behavior of multiple decision-makers in unknown, uncertain, and non-stationary environments without any explicit communication among them is of immense interest in a variety of applications, including robotic swarming, spectrum access, and Internet of Things. Such decision-making problems often require decision-makers to balance tradeoff between learning the environment and the policies of other decision-makers, and maximizing their own utility.




The distributed  multi-player MAB problem embodies this fundamental decision-making tradeoff. In these problems,  multiple decision-makers sequentially allocate a single resource each by repeatedly choosing one among a set of competing alternative arms (options). The objective of decision-makers is to maximize their total reward, while ensuring that their allocation at each time is not coincident with any other decision-maker. In a non-stationary version of these problems, the utility (reward) of the arms varies with time.  These problems provide a rich modeling framework for a variety of interesting settings, including, robotic foraging and surveillance~\cite{JRK-AK-PT:78,VS-PR-NEL:13, VS-PR-NEL:14}, acoustic relay positioning for underwater communication~\cite{MYC-JL-FSH:13}, and channel allocation in communication networks~\cite{anandkumar2011distributed}.

Most of the algorithmic solutions to the multi-player MAB problem deal with a stationary environment. In~\cite{Multi-playerMABIID,Multi-playerMABMarkovian}, a lower bound on expected cumulative group regret for a centralized policy is derived and algorithms that asymptotically achieve this lower bound are designed.  
Distributed multi-player MAB problem with no communication among players 
has been studied in~\cite{anandkumar2011distributed,gai2014distributed,liu2010distributed,kalathil2014decentralized,NN-DK-RJ:16}. Distributed cooperative multi-player MAB problem in which agents communicate to improve their estimates of mean rewards are studied in~\cite{shahrampour_rakhlin_jadbabaie_2017,arxiv:LandgrenSL15,DBLP:journals/corr/LandgrenSL16}. 





In the context of the single player MAB problem, some classes of non-stationary MAB problem have been studied in the literature. In~\cite{auer2002nonstochastic}, authors study a non-stochastic MAB problem in which the rewards are deterministic and non-stationary. They propose algorithmic solutions to this problem and study a weaker notion of the regret, wherein the policy generated by the algorithm is compared 
against the best policy within the policies that select the same arm at each time. These algorithms  are adapted to handle a class of non-stationary environments and upper bounds on the standard notion of the regret are derived in~\cite{besbes2014optimal}.  In~\cite{AG-EM:08}, authors study a class of non-stationary MAB problems in which the mean rewards at arms may switch abruptly at unknown times to unknown values. They design an upper confidence bound (UCB) based algorithm that relies on estimates of the mean rewards from a recent time-window of observations. In~\cite{liu2017change}, authors study the MAB problem in a piecewise-stationary environment. They use active detection algorithms to determine the change-points and restart the UCB algorithm. Authors in~\cite{LW-VS:17i} develop variants of the algorithm in~\cite{AG-EM:08} for non-stationary MAB problems with abruptly-changing and slowly-varying mean rewards. They also develop deterministic sequencing of exploration and exploitation type of algorithms for these environments.

In this paper, we build upon the literature on distributed multi-player MAB problem with no communication among players and design analogous algorithms for a class of non-stationary environments. In particular, we extend ideas in~\cite{LW-VS:17i} to the multi-player setting and design two algorithms to solve these problems.  
The major contributions of this work are twofold. First, we design and rigorously analyze two algorithms, namely, RR-SW-UCB\# and SW-DLP for the multi-player MAB problem in abruptly changing environments. Second, we augment our analysis with numerical illustrations. 

The paper is organized as follows. In Section~\ref{sec:background}, we discuss preliminaries and introduce the problem. In Section~\ref{sec:RR-SW-UCB} and \ref{sec:SW-DLP}, we design and analyze the RR-SW-UCB\# and SW-DLP algorithms, respectively. We illustrate these algorithms with some numerical examples in Section~\ref{sec:numerical}. Finally, we conclude in Section~\ref{sec:conclusions}.

\section{Background \& Problem Description}\label{sec:background}

In this section, we formulate the muti-player non-stationary MAB problem, and introduce a class of non-stationary environments: the abruptly-changing environment.

\subsection{The non-stationary multi-player stochastic MAB problem}
Consider an $N$-armed bandit problem, i.e., an MAB problem with $N$ arms. Let the total number of players be $M \in \until{N}$. The reward associated with arm $i\in \until{N}$ is modeled as a random variable with bounded support $[0,1]$ and an unknown time-varying mean $\mu_i(t) \in [0,1]$. Each player $j \in \until{M}$ selects a particular arm $s_j(t) \in \until{N}$ at time $t \in \until{T}$ and observes a reward $r_{s_j(t)}(t)$ associated with $s_j(t)$, where $T$ is the time horizon. The adaptive decision rule employed by player $j$ is given by $\rho(\union_{d=1}^{t} \{ r_{s_j(d)}(t), s_j(d) \})$, which is a distributed policy based on its own observation and decision history.

We assume a collision model $\mathcal{M}$ in which player $j$ receives a reward $r_{s_j(t)}(t)$ from arm $s_j(t) $ if it is the only player to select arm $s_j(t)$ at time $t$. Then, the expected total reward can be written as
\begin{equation}\label{exp_re}
	\expt^\rho [S]=\expt^\rho \big[\sum_{t=1}^{T}\sum_{i=1}^{N} \sum_{j=1}^{M}\mu_i(t)\mathcal{O}_{i,j}(t) \big],
\end{equation}
where $\mathcal{O}_{i,j}(t)=1$ if only player $j$ selects arm $i$ at time $t$ and is zero otherwise;  and the expectation is computed over different realization of $\mathcal{O}_{i,j}(t)$ under policy $\rho$.

Let $\sigma_t$ be a permutation of $\until{N}$ at time $t$ such that 
$$\mu_{\sigma_t(1)}(t) \geq \ldots \geq \mu_{\sigma_t(N)}(t).$$
The maximization of $\expt^\rho[S]$ in~\eqref{exp_re} is equivalent to minimizing the expected cumulative group regret defined by
$$R^\rho_\mathcal{M}(T)=\sum_{t=1}^{T}\sum_{k=1}^{M}\mu_{\sigma_t(k)}(t)-\expt^\rho \big[\sum_{t=1}^{T}\sum_{i=1}^{N} \sum_{j=1}^{M}\mu_i(t) \mathcal{O}_{i,j}(t) \big].$$
$R_\mathcal{M}^\rho(T)$ is the expect difference between the cumulative reward of the perfect selection and that of policy $\rho$ under collision model $\mathcal{M}$. Our main purpose here is to design policy $\rho$  that minimizes $R_\mathcal{M}^\rho(T)$.

In this paper, we study the above MAB problem in an abruptly-changing environment, in which the mean rewards from arms switch to unknown values at unknown time instants. We refer to these time instants as \emph{breakpoints}. We assume that the number of breakpoints until time $T$ is $\Upsilon_T \in O(T^\nu)$, where $\nu \in [0,1)$ and is known a priori. In addition, the minimum difference in mean rewards between any pair of arms at any time is lower bounded by $\Delta_{\min}$.


\subsection{Algorithms for the stationary stochastic MAB problem}
In the stationary MAB problem, $\mu_i(t)$ is a constant function of time denoted by $\mu_i$, for each $i\in \until{N}$. In this setting, two algorithms relevant to this paper are : (i) UCB for the single player MAB problem (ii) Distributed Learning algorithm with Prioritization (DLP) for multi-player MAB problem.

The UCB algorithm initializes by selecting each arm once and subsequently selects the arm $s(t)$ at time $t$ defined by 
\[
s(t) = \argmax_{i \in \until{N} } \{\bar r_i (t-1) + c_i(t-1)\}, 
\]
where $\bar r_i (t-1)$ is the statistical mean of the reward from arm $i$ until time $t$, $c_i(t-1)=\sqrt{\frac{2 \ln t }{n_i(t-1)}}$ and $n_i(t-1)$ is the number of times arm $i$ has been selected until time $t$. Auer~\etal \ \cite{PA-NCB-PF:02} showed that UCB algorithm achieves expected cumulative regret that is within a constant factor of the optimal. 

DLP is a distributed algorithm based on the UCB algorithm. Here, distributed algorithm means that a player makes a decision only based upon its own observation history and does not communicate with other players. To avoid regret caused by multiple players selecting the same arm, Selective Learning of the $k$-th largest expected reward (SL($k$)) algorithm is designed aiming at making player $k$ pick the $k$-th best arm. Towards this end, player $k$ determines the set $A_k(t)$ of $k$ arms associated with the $k$ largest values in the set $\setdef{\bar r_i (t-1) + c_i(t-1)}{i \in \until{N}}$, and select arm 
$$s_k(t)=\argmin_{i\in A_k(t)} \{ \bar r_i (t-1) - c_i(t-1) \}.$$
Gai~\etal \ \cite{gai2014distributed} established that DLP achieves order-optimal expected cumulative regret.

\subsection{SW-UCB\# for the non-stationary MAB problem}
The Sliding-Window UCB (SW-UCB) algorithm~\cite{AG-EM:08} was adapted in~\cite{LW-VS:17i} to develop SW-UCB\#. SW-UCB\# is an algorithm for  single player MAB problem in non-stationary environment. At time $t$, it maintains an estimate of the mean reward $\bar{r}_i(t,\alpha)$ at each arm $i$, using only the rewards collected within a sliding-window of observations. Let the width of the sliding-window at time $t \in \until{T}$ be $\tau(t, \alpha) = \min\{\lceil \lambda t^\alpha \rceil, t\}$, where parameter $\alpha \in (0,1]$ is tuned based on environment characteristics. 
Let $n_i(t,\alpha)=\sum_{d= t- \tau(t,\alpha)+1}^{t} \bs{1}_{\{ s(d)=i \}}$ be the number of times arm $i$ has been selected within the time-window at time $t$, where $\bs{1}_{\{\cdot \}}$ is the indicator function, and 
\begin{equation}
	\overline{r}_i(t, \alpha)=\frac{1}{n_i(t, \alpha)} \sum_{d= t- \tau (t, \alpha)+1}^{t} r_i(d) \bs{1}_{\{ s(d)=i \}}.\label{Sample_Mean}
\end{equation}
Based on the above estimate, the SW-UCB\# algorithm at each time selects the arm 
\begin{equation}
s(t) =\argmax_{i \in \until{N}}\{\overline{r}_i(t-1, \alpha) + c_i(t-1, \alpha)\},\label{UCB}
\end{equation}
where  $c_i(t-1, \alpha)=  \sqrt{\frac{(1+\alpha) \ln t}{n_i(t-1, \alpha)}}$. In the following, we would refer to $\overline{r}_i(t-1, \alpha) + c_i(t-1, \alpha)$, for $i \in \until{N}$ as the upper confidence bounds on the estimated rewards. It has been shown in \cite{LW-VS:17i} that the expected cumulative regret for SW-UCB\# under the abruptly-changing environment is upper bounded by a sublinear function of time, i.e., the time average of the regret asymptotically converges to zero.

\section{The Round Robin SW-UCB\# algorithm in Abruptly-Changing Environment}\label{sec:RR-SW-UCB}
In this section, we present the Round Robin SW-UCB\# algorithm (RR-SW-UCB\#) for multi-player MAB problem in abruptly-changing environment.

\subsection{The RR-SW-UCB\# algorithm}
In the RR-SW-UCB\# algorithm, each player maintains the estimate of mean rewards using a sliding window of observations. Each player $k \in \until{M}$ computes $\bar{r}_i^k(t,\alpha)$ and $c_i^k(t,\alpha)$ as in \eqref{Sample_Mean} and \eqref{UCB} using their own observation, and maintains an upper confidence bound on the estimated mean reward $ \bar{r}_{i}^k(t-1,\alpha) +c_i^k(t-1,\alpha) $ from each arm $ {i \in \until{N}}$. For initial $N$ iterations, i.e., $t \in \until{N}$, the player $k$ selects each arm once. Then, at time instants $\seqdef{N+\eta M+1}{\eta \in \integer_{\geq 0}}$, it computes the set $\Omega_k$ containing $M$ arms with $M$ largest values in the set
$$\setdef{\bar{r}_{i}^k(t-1,\alpha) +c_i^k(t-1,\alpha)}{i \in \until{N}}.$$
These arms are sorted in ascending value of their indices (not using the upper confidence bounds), and the player $k$ selects these arms in a round robin fashion starting with the $k$-th arm in the set. It will be shown in the following section that the estimated set of $M$ best arms, $\Omega_k$, will be the same for each player with high probability. Details of RR-SW-UCB\# is shown in Algorithm \ref{algo:RR-SW-UCB}. The free parameter $\lambda$ in the algorithm can be used to refine the finite time performance of the algorithm. 




\IncMargin{.3em}
\begin{algorithm}[t]
	{\footnotesize
		\SetKwInOut{Input}{  Input}
		\SetKwInOut{Set}{  Set}
		\SetKwInOut{Title}{Algorithm}
		\SetKwInOut{Require}{Require}
		\SetKwInOut{Output}{Output}
		
		{\it For abruptly-changing environment} \\   
		
		\Input{$\nu \in [0,1)$, $\Delta_{\min} \in (0,1)$, $\lambda \in \real_{>0}$ , $T\in \natural$ \& $k$\;}
		
		\Set{$\alpha = \frac{1-\nu}{2}$\;}		
		\smallskip
		
%
%
		
		\Output{sequence of arm selection for player $k$\;}
		
		\medskip
		
		\emph{\% Initialization:}
		
		\nl Set $\Omega_k \leftarrow \emptyset$, ordered set $\mathcal{G} \leftarrow ()$, and $t\leftarrow 1$\;
		\smallskip
		
		\nl \While{$t \leq T$}{
			\smallskip
			
			\smallskip
			
			\nl  \If{$t \in \until{N}$ \smallskip}
			{Pick arm $s_k(t)=\text{mod}(t+k-2,N)+1$;
			}
	
			\smallskip
			\smallskip
			
			\nl  \Else
			{Compute $\Omega_k$ containing $M$ arms with $M$ largest values in
				$$\setdef{\bar{r}_{i}^k(t-1,\alpha) +c_i^k(t-1,\alpha)}{i \in \until{N}};$$
					
				Ascending sort the arm indices in $\Omega_k$, $\mathcal{G} \leftarrow \text{sort}_\uparrow(\Omega_k)$;
				\smallskip
				
				\For{$i\in \until{M}$\smallskip}
				{Pick arm $s_k(t)=\mathcal{G} (\text{mod}(t-N+k-2,M)+1);$\\
				$t \leftarrow t+1$\;
				}				
			}
			\smallskip					
		}

		\caption{\textit{The RR-SW-UCB\# Algorithm}}
		\label{algo:RR-SW-UCB}}
\end{algorithm} 

\DecMargin{.3em}

\subsection{Analysis of RR-UCB\#}
Before the analysis, we introduce the following notation. Let $\Omega^M_*(t)$ denote the set of $M$ arms with the $M$ largest mean rewards at time $t$. Then, the total number of times $\Omega_k(t) \ne \Omega^M_*(t)$  until time $T$ can be defined as
$$\mathcal{N}_k(T) := \sum_{t=1}^T \bs{1}_{\{\Omega_k(t) \neq \Omega^M_*(t)\}}.$$
We now upper bound $\mathcal{N}_k(T)$ in the following lemma.

\begin{lemma}\label{lemma:misidentify}
	For the RR-SW-UCB\# algorithm and the multi-player MAB problem with with $N$ arms and $M$ players in the abruptly-changing environment with the number of break points $\Upsilon_T=O(T^\nu)$, $\nu \in [0,1)$, the total number of times $\Omega_k(t) \ne \Omega^M_*(t)$ until time $T$ for any player $k$ satisfies 
	\begin{align*}
	&\mathcal{N}_k (T) \!  \leq  \! (N-M) \Big [ \Big (\frac{T^{1-\alpha}}{\lambda(1-\alpha)}+1 \Big ) \Big ( 1+\frac{4M (1+\alpha) \ln T }{\Delta_{\min}^2} \Big ) \\
	& +\frac{\pi^2}{3}\Big(\frac{\lambda + M + 1}{M} \Big)^2 \Big ] +\Upsilon_T \big (\lceil \lambda (T-1)^\alpha \rceil +M-1 \big )+N.
	\end{align*}
\end{lemma}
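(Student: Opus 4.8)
The plan is to split the sum defining $\mathcal{N}_k(T)$ into three parts: the initialization phase, the decision epochs whose sliding window straddles a breakpoint, and the remaining ``clean'' epochs. Here an \emph{epoch} denotes one of the length-$M$ round-robin cycles that begin at a time in $\{N+1,N+M+1,\dots\}$; on any epoch both $\Omega_k(\cdot)$ and (on a clean epoch) $\Omega^M_*(\cdot)$ are constant, so $\bs{1}_{\{\Omega_k(t)\neq\Omega^M_*(t)\}}$ is constant on the epoch. The steps $t\in\{1,\dots,N\}$ contribute at most $N$, since $\Omega_k$ is not yet formed. The window width $\tau(t-1,\alpha)$ used at decision time $t$ is nondecreasing in $t$ and, on $\{1,\dots,T\}$, bounded by $\lceil\lambda(T-1)^\alpha\rceil$; hence a single breakpoint can lie inside the window for at most $\lceil\lambda(T-1)^\alpha\rceil$ time steps, and the ``stale-set'' lag (the set is recomputed only every $M$ steps) adds at most $M-1$ further steps, so the breakpoint-contaminated epochs contribute at most $\Upsilon_T(\lceil\lambda(T-1)^\alpha\rceil+M-1)$. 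It then remains to bound $M$ times the number of clean epochs on which $\Omega_k\neq\Omega^M_*$.

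Next I would set up the bad event on a single clean epoch. Fix a clean recompute time $t$ with $\Omega_k(t)\neq\Omega^M_*(t)$. Both sets have size $M$, so there exist $j\in\Omega_k(t)\setminus\Omega^M_*(t)$ and $i\in\Omega^M_*(t)\setminus\Omega_k(t)$; since $\Omega_k(t)$ collects the $M$ largest upper confidence bounds, $\bar r_j^k(t-1,\alpha)+c_j^k(t-1,\alpha)\ge\bar r_i^k(t-1,\alpha)+c_i^k(t-1,\alpha)$, while $\mu_i-\mu_j\ge\Delta_{\min}$ by the minimum-gap assumption. The usual UCB dichotomy then forces at least one of (E1) $\bar r_i^k(t-1,\alpha)+c_i^k(t-1,\alpha)\le\mu_i$, (E2) $\bar r_j^k(t-1,\alpha)-c_j^k(t-1,\alpha)\ge\mu_j$, (E3) $n_j^k(t-1,\alpha)<\tfrac{4(1+\alpha)\ln t}{\Delta_{\min}^2}$: if (E1) and (E3) both fail then $c_j^k(t-1,\alpha)\le\Delta_{\min}/2$ and $\bar r_j^k+c_j^k\ge\bar r_i^k+c_i^k>\mu_i\ge\mu_j+2c_j^k$, which is (E2). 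Since at most $N-M$ arms can play the role of $j$, it suffices to bound, for each fixed such arm, the number of clean epochs on which $j\in\Omega_k$ and one of (E1)--(E3) holds, and then multiply by $M(N-M)$.

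For the (E3) term I would use the sliding-window blocking argument of \cite{LW-VS:17i}: partition $\{1,\dots,T\}$ into at most $\tfrac{T^{1-\alpha}}{\lambda(1-\alpha)}+1$ consecutive blocks whose lengths track the window width at their left endpoint, so that for every $t$ in a block the window $[\,t-\tau(t-1,\alpha),\,t-1\,]$ reaches back at least to that block's start. Within one block, once player $k$ has selected arm $j$ at $\lceil\tfrac{4(1+\alpha)\ln T}{\Delta_{\min}^2}\rceil$ of the block's recompute times, all those selections lie in the current window, so $n_j^k(t-1,\alpha)$ is large enough and (E3) can no longer occur in that block; since $j\in\Omega_k$ at a recompute time forces exactly one selection of $j$ that epoch, each block has $O\!\big(\tfrac{(1+\alpha)\ln T}{\Delta_{\min}^2}\big)$ such epochs. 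Summing over blocks and over the $N-M$ bad arms and multiplying by $M$ gives, modulo the rounding bookkeeping, the $(N-M)\big(\tfrac{T^{1-\alpha}}{\lambda(1-\alpha)}+1\big)\big(1+\tfrac{4M(1+\alpha)\ln T}{\Delta_{\min}^2}\big)$ contribution. For (E1) and (E2) I would take expectations: on a clean window the selected rewards of a fixed arm are i.i.d.\ with the current mean, so Hoeffding's inequality together with a union bound over the at most $\lceil\tau(t-1,\alpha)/M\rceil$ possible values of the in-window count (round robin selects any arm at most once per $M$ steps) yields $\prob[\mathrm{E1}],\prob[\mathrm{E2}]\le\lceil\tau(t-1,\alpha)/M\rceil\,t^{-2(1+\alpha)}$; bounding $\lceil\tau(t-1,\alpha)/M\rceil\le\tfrac{\lambda+M+1}{M}\,t^{\alpha}$ and summing the resulting convergent series over the decision epochs (whose recompute times obey $t\ge\eta M$, so the tail is controlled by $\sum_\eta\eta^{-2}$) contributes the constant $\tfrac{\pi^2}{3}\big(\tfrac{\lambda+M+1}{M}\big)^2$ per bad arm, i.e.\ $(N-M)\tfrac{\pi^2}{3}\big(\tfrac{\lambda+M+1}{M}\big)^2$ in total. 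Adding these to the $N$ initialization steps and the $\Upsilon_T(\lceil\lambda(T-1)^\alpha\rceil+M-1)$ contaminated steps gives the claimed inequality.

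The step I expect to be the real obstacle is the (E3) bound: making the block partition precise so that its cardinality is exactly controlled by $\tfrac{T^{1-\alpha}}{\lambda(1-\alpha)}+1$ while guaranteeing that inside every block the window always reaches back to the block's left endpoint, and then carefully bookkeeping the conversion from ``bad recompute epochs'' to ``bad time steps'' (the factor $M$) together with the interaction between these blocks, the stationary sub-segments between breakpoints, and the ceilings in $\tau(t,\alpha)$. Everything else — the UCB dichotomy, the Hoeffding tail bounds, and the series summations — is routine.
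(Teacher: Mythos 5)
Your proposal is correct and follows essentially the same route as the paper's proof: the same three-way decomposition (initialization, breakpoint-contaminated windows with the $M-1$ stale-set correction, clean steps), the same UCB dichotomy with threshold $l(t,\alpha)=4(1+\alpha)\ln t/\Delta_{\min}^2$, the same convexity-based block partition of cardinality at most $T^{1-\alpha}/(\lambda(1-\alpha))+1$ for the under-sampled term, and the same Chernoff--Hoeffding plus union-bound-over-window-counts argument summed over recompute times for the well-sampled term. The only cosmetic difference is that you union-bound each tail event over a single in-window count (giving one factor of $\lceil\tau/M\rceil$ per event) whereas the paper sums over both counts simultaneously (giving the square); your version is in fact slightly tighter and still implies the stated constant.
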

\medskip
\begin{proof}
	We begin by separately analyzing windows with and without breakpoints.
	
	\noindent\textbf{Step 1:}
	We define set $\mathcal{T}$ such that for all $t\in\mathcal{T}$, $t$ is either a breakpoint or there exists a break point in its sliding-window of observations $\{t-\tau(t-1,\alpha),\ldots,t-1\}$. For $t\in\mathcal{T}$, the statistical means are biased. It follows that
	$$|\mathcal{T}| \leq \Upsilon_T \lceil \lambda (T-1)^\alpha \rceil.$$
Consequently, $\mathcal{N}_k(T)$ can be upper-bounded as
	\begin{equation}\label{eq:boundN1}
		\mathcal{N}_k(T) \leq \Upsilon_T \Big (\lceil \lambda (T-1)^\alpha \rceil+M-1 \Big ) + \tilde{\mathcal{N}}_k(T),
	\end{equation}
	where $\tilde{\mathcal{N}}_k(T) := \sum_{t=1}^T \bs{1}_{\{ \Omega_k(t) \neq \Omega^M_*(t), t \notin \mathcal{T}\}}$. The term $M-1$ in~\eqref{eq:boundN1} is due to the fact that $\Omega_k$ is computed every $M$ steps. In the following steps, we will bound $\tilde{\mathcal{N}}_k(T)$.
	
	\noindent\textbf{Step 2:}
	If $\Omega_k(t) \neq \Omega^M_*(t)$, there exists at least one arm $i$ such that $i \in \Omega_k(t)$ and $i \notin \Omega^M_*(t)$. Then, it can be shown that
	\begin{align}
	&\tilde{\mathcal{N}}_k(T) \leq  \! N \! + \!\!\!\! \sum_{t=N+1}^T \sum_{i=1}^{N} \bs{1}_{\{i \in \Omega_k(t), i \notin \Omega^M_*(t), t\notin 	\mathcal{T}, n_i(t-1,\alpha)< l(t,\alpha) \}} \nonumber \\
	& +\sum_{t=N+1}^T \sum_{i=1}^{N} \bs{1}_{\{i \in \Omega_k(t), \, i \notin \Omega^M_*(t), t\notin \mathcal{T}, n_i(t-1,\alpha) \geq l(t,\alpha) \}},\label{eq:boundN2}		
	\end{align}
	where we choose $l(t,\alpha)=\frac{4 (1+\alpha) \ln t }{\Delta_{\min}^2}$.
	
	We begin with bounding the second term on the right hand side of inequality \eqref{eq:boundN2}. First, we partition time instants into $G$ epochs.	Let $G \in \natural$ be such that
	\begin{equation}
	[\lambda(1-\alpha)(G-1)]^{\frac{1}{1-\alpha}}< T \leq [\lambda(1-\alpha) G]^{\frac{1}{1-\alpha}}.
	\label{eq:G}
	\end{equation}
	Then, we have the following epochs
	\begin{equation}\label{partition}
	\{1+\phi(g-1),\ldots,\phi(g)\}_{g \in \until{G}},
	\end{equation}
	where $\phi(g) = \big \lfloor [\lambda(1-\alpha) g]^{\frac{1}{1-\alpha}} \big \rfloor$. Let $\tilde{t}$ be any time instant other than the first instant in the $g$-th epoch. We will now show that all but one of the time instants in the $g$-th epoch until $\tilde{t}$ are ensured to be contained in the time-window at $\tilde{t}$. 
	Towards this end, consider the increasing convex function $f(x)=x^{\frac{1}{1-\alpha}}$ with $\alpha \in (0,1)$. It follows that $f(x_2)-f(x_1) \leq f'(x_2)(x_2-x_1)$ if $x_2 \geq x_1$. Then, substituting $x_1= g-1$ and $x_2 = \frac{\tilde{t}^{1-\alpha}}{\lambda(1-\alpha)}$ in the above inequality and simplifying, we get 	
	\begin{equation*}
		\tilde{t} - (\lambda(1-\alpha)(g-1))^{\frac{1}{1-\alpha}}\leq \lambda \tilde{t}^\alpha \Big(\frac{\tilde{t}^{1-\alpha}}{\lambda(1-\alpha)}-g+1\Big).
	\end{equation*}
	Since by definition of the $g$-th epoch, $\frac{\tilde{t}^{1-\alpha}}{\lambda(1-\alpha)} \leq g$, we have
	\begin{align*}
		\tilde{t}-\lfloor (\lambda (1-\alpha) (g-1))^{\frac{1}{1-\alpha}}\rfloor		
		&\leq \min\{ \tilde t +1, \lambda \lceil \tilde{t}^{\alpha}\rceil+1 \}  \\
		& =  \tau(\tilde{t},\alpha)+1. \label{window}
	\end{align*}
	The only time instant in $g$-th epoch that is possibly not contained in time window at $\tilde{t}$ is $1+\phi(g-1)$. Then for any arm $i \in \until{N}$,
	\begin{equation}\label{ieq:windowbound}
		\sum_{2+\phi(g-1)}^{\tilde{t}} \bs{1}_{\{ i \in \Omega_k(t) \}} \leq Mn_i(\tilde{t}).
	\end{equation}
	
Furthermore, in the $g$-th epoch in the partition, either
	$$\sum_{t \in g \text{-th epoch}} \sum_{i=1}^{N} \bs{1}_{\{ i \in \Omega_k(t), \, i \notin \Omega^M_*(t),\, t\notin \mathcal{T},\, n_i(t-1,\alpha)< l(t,\alpha) \}} = 0, $$
	or there exist at least one time-instant $t$ in the $g$-th epoch such that
	\begin{equation*}
		\sum_{i=1}^{N} \bs{1}_{\{i \in \Omega_k(t), \, i \notin \Omega^M_*(t),\, t\notin \mathcal{T},\, n_i(t-1,\alpha)< l(t,\alpha) \}} > 0. 
	\end{equation*}	
	Let the last time instant satisfying this condition in the $g$-th epoch be
	\begin{align*}
		t(g) = \max \{ & t \in g \text{-th epoch}| \\
		&\sum_{i=1}^{N} \bs{1}_{\{ i \in \Omega_k(t), \, i \notin \Omega^M_*(t),\, t\notin \mathcal{T},\, n_i(t-1,\alpha)< l(t,\alpha) \}} > 0 \}.
	\end{align*}
	Note that $t(g)\notin \mathcal{T}$ indicates, for each $i \in \until{N}$, $\mu_i(s)$ is a constant for all $s \in \{t(g)-\tau 
	( t(g)-1,\alpha ),\ldots,t(g)\}$.	
	Then, it follows from~\eqref{ieq:windowbound} that 
	\begin{align}\label{ieq:inepoch}
	&\sum_{t \in g\text{-th epoch}}\sum_{i=1}^{N} \bs{1}_{\{i \in \Omega_k(t), \,i \notin \Omega^M_*(t), t\notin \mathcal{T}, n_i(t-1,\alpha) < l(t,\alpha) \}} \nonumber \\
	&\leq \! N \! - \! M + \!\!\!\!\!\!\!\!\!  \sum_{t=\phi(g-1)+2}^{t(g)} \sum_{i=1}^{N} \bs{1}_{\{ i \notin \Omega^M_*(t), t\notin \mathcal{T}, n_i(t-1,\alpha) < l(t,\alpha) \}} \nonumber \\
	&\leq N-M + \sum_{i=1}^N M l(t_i(g),\alpha)\bs{1}_{\{ i \notin \Omega^M_*(t(g)),\, t(g) \notin \mathcal{T} \}} \nonumber \\
	&\leq (N-M) \Big ( 1 + \frac{4M (1+\alpha) \ln T }{\Delta_{\min}^2} \Big ),
	\end{align}
	where 	
	$
		t_i(g) = \max \setdef{t \in g \text{-th epoch}}{i \in  \Omega_k(t), \, i \notin \Omega^M_*(t), \, t\notin \mathcal{T}, \, n_i(t-1,\alpha)< l(t,\alpha) } 
	$
	and $t_i(g) \leq t(g)$ for all $i \in \until{N}$. Therefore, from \eqref{partition} and \eqref{ieq:inepoch} , we have
	\begin{align}\label{eq:boundN3}
		&\sum_{t=N+1}^T \sum_{i=1}^{N} \bs{1}_{\{i \in \Omega_k(t), i \notin \Omega_*(t),n_i(t-1,\alpha)< l(t,\alpha) \}} \nonumber\\
		&\leq  G(N-M) \Big (1+ \frac{4 M  (1+\alpha) \ln T }{\Delta_{\min}^2} \Big ).
	\end{align}
		
	\noindent \textbf{Step 3:}
	In this step, we bound the expectation of the last term in \eqref{eq:boundN2}. It can be shown that
	\begin{align}
	&\sum_{i=1}^{N} \bs{1}_{\{ i \in \Omega_k(t), i \notin \Omega^M_*(t), t\notin \mathcal{T}, n_i(t-1,\alpha) \geq l(t,\alpha)  \}} \nonumber \\
	&\leq\!\!\!\! \sum_{i \notin \Omega^M_*(t)} \sum_{\zeta \in \Omega^M_*(t)}  \sum_{s_\zeta=1}^{h(t)}\sum_{s_i=l(t,\alpha)}^{h(t)} \!\!\!\!\! \bs{1}_{\{n_\zeta(t-1,\alpha)=s_\zeta, \, n_i(t-1,\alpha)=s_i , t\notin \mathcal{T}\}} \label{a} \nonumber \\ 
	& \times \bs 1_{\{\overline{r}_{\zeta}(t-1,\alpha) +c_\zeta(t-1,\alpha) \leq \overline{r}_{i}(t-1,\alpha) +c_{i}(t-1,\alpha), n_i(t-1,\alpha) \geq l(t,\alpha)\} },
	\end{align}
	where $h(t) :=\Big \lceil \frac{\lceil \lambda (t-1)^ \alpha \rceil}{M} \Big \rceil$ is the maximum number of times an arm can be selected within the time window at $t-1$. Note that $\overline{r}_{j}(t-1,\alpha) + c_j(t-1,\alpha) \leq \overline{r}_i(t-1,\alpha) +c_i(t-1,\alpha)$ means at least one of the following holds.
	\begin{align}
	\overline{r}_{i}(t-1,\alpha) &\geq  \mu_i(t) + c_i(t-1,\alpha), \label{b}\\
	\overline{r}_{\zeta}(t-1,\alpha) &\leq  \mu_{\zeta}(t) - c_{\zeta}(t-1,\alpha), \label{c}\\
	\mu_{\zeta}(t) - \mu_i (t) & < 2 c_i(t-1,\alpha). \label{d}
	\end{align}
	Since $n_i(t-1,\alpha) \geq l(t,\alpha)$, (\ref{d}) does not hold. Applying Chernoff-Hoeffding inequality~\cite[Theorem 1]{WH:63} to bound the probability of events (\ref{b}) and (\ref{c}), we obtain
	\begin{align}
	\mathbb{P}( \overline{r}_{i}(t-1,\alpha) &\geq  \mu_i(t) + c_i(t-1,\alpha) )\leq t^{-2(1+\alpha)} \label{e},\\
	\mathbb{P}( \overline{r}_{j}(t-1,\alpha) &\leq  \mu_{j}(t) -  c_{j}(t-1,\alpha) )\leq t^{-2(1+\alpha)}. \label{f}
	\end{align}
	Since $\Omega_k$ is only computed at time instants $\seqdef{N+\eta M+1}{\eta \in \integer_{\geq 0}}$, from \eqref{a}, \eqref{e} and \eqref{f}, we have
	\begin{align}\label{eq:boundN4}
	&\mathbb{E} \Big[ \sum_{t=N+1}^T \sum_{i=1}^{N} \bs{1}_{\{i \in \Omega_k(t), i \notin \Omega^M_*(t), t\notin \mathcal{T}, n_i(t-1,\alpha) \geq l(t,\alpha) \}} \Big]  \nonumber \\
    &\leq (N-M)M \sum_{s_\zeta=1}^{h(f(\eta))} \sum_{s_i=l(t,\alpha)}^{h(f(\eta))} \sum_{\eta=0}^{\lceil \frac{T-N}{M} \rceil }2M f(\eta)^{-2(1+\alpha)} \nonumber \\
	&\leq (N-M)M^2 \sum_{\eta=0}^{\lceil \frac{T-N}{M} \rceil }2f(\eta)^{-2(1+\alpha)} h(f(\eta))^2  \nonumber \\
	&\leq (N-M) \Big(\frac{\lambda + M + 1}{M} \Big)^2 \sum_{\eta=1}^{\infty}2  \eta^{-2} \nonumber\\
	&= \frac{\pi^2}{3} (N-M)\Big(\frac{\lambda + M + 1}{M} \Big)^2 ,
	\end{align}
	where $f(\eta):=N+\eta M+1$. Therefore, it follows from (\ref{eq:boundN1}), (\ref{eq:boundN2}), (\ref{eq:boundN3}), and (\ref{eq:boundN4}) that
	\begin{align*}
	\mathcal{N}_k &(T)  \leq N+ (N-M) \Big [ G \Big ( 1+\frac{4M (1+\alpha) \ln T }{\Delta_{\min}^2} \Big ) \\
	& +\frac{\pi^2}{3}\Big(\frac{\lambda + M + 1}{M} \Big)^2 \Big ] +\Upsilon_T \big (\lceil \lambda (T-1)^\alpha \rceil +M-1 \big ).
	\end{align*}
	From (\ref{eq:G}), we have $G\leq \frac{T^{1-\alpha}}{\lambda(1-\alpha)}+1$, and this yields the conclusion.	
\end{proof}

Based on Lemma~\ref{lemma:misidentify}, we now establish the order of expected cumulative group regret of RR-SW-UCB\# in the abruptly changing environment.
\begin{theorem}\label{theorem:RR-SW-UCB bound}
	For the RR-SW-UCB\# algorithm and the multi-player MAB problem with $N$ arms and $M$ players in the abruptly-changing environment with the number of break points $\Upsilon_T=O(T^\nu)$, $\nu \in [0,1)$, under collision model $\mathcal{M}$, the expected cumulative group regret satisfies 
	$$R_\mathcal{M}^{\text{RR-SW-UCB\#}}(T)\in O\big(T^{\frac{1+\nu}{2}}\ln T \big)$$
\end{theorem}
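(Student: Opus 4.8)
The plan is to reduce the expected cumulative group regret to the mis-identification counts $\mathcal{N}_k(T)$ bounded in Lemma~\ref{lemma:misidentify}, and then substitute the choice $\alpha = \frac{1-\nu}{2}$ made by the algorithm. The first step is a per-step dichotomy. At every time $t$ the instantaneous group regret $\sum_{k=1}^M \mu_{\sigma_t(k)}(t) - \sum_{i=1}^N\sum_{j=1}^M \mu_i(t)\mathcal{O}_{i,j}(t)$ is at most $M$, since the rewards lie in $[0,1]$. More importantly, it is exactly zero on any step at which $\Omega_k(t) = \Omega^M_*(t)$ for all players $k$: in that case every player sorts the same set into $\mathcal{G} = \text{sort}_\uparrow(\Omega^M_*(t))$, and the round-robin offsets $\text{mod}(t-N+k-2,M)+1$ for $k \in \until{M}$ are $M$ consecutive residues modulo $M$, hence pairwise distinct. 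Consequently the $M$ players occupy $M$ distinct arms, all belonging to $\Omega^M_*(t)$; there is no collision, and the reward collected is $\sum_{i\in\Omega^M_*(t)}\mu_i(t) = \sum_{k=1}^M \mu_{\sigma_t(k)}(t)$, the per-step benchmark.

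Summing the dichotomy over $t$ and taking expectations yields the bound. Since $\Omega_k$ is initialized to $\emptyset$ and only updated for $t > N$, every step with nonzero instantaneous regret --- in particular every step of the initialization phase --- lies in the event $\{\exists k:\ \Omega_k(t)\neq\Omega^M_*(t)\}$, and a union bound over the $M$ players gives
\begin{equation*}
R_\mathcal{M}^{\text{RR-SW-UCB\#}}(T) \leq M\, \mathbb{E}\Big[\sum_{t=1}^T \bs{1}_{\{\exists k:\, \Omega_k(t)\neq\Omega^M_*(t)\}}\Big] \leq M\sum_{k=1}^M \mathbb{E}[\mathcal{N}_k(T)].
\end{equation*}

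The last step is to plug in Lemma~\ref{lemma:misidentify} with $\alpha = \frac{1-\nu}{2}$, treating $N$, $M$, $\lambda$, $\Delta_{\min}$ as fixed constants. The two leading terms in the bound on $\mathcal{N}_k(T)$ are the one of order $T^{1-\alpha}\ln T$ arising from $\big(\frac{T^{1-\alpha}}{\lambda(1-\alpha)}+1\big)\big(1+\frac{4M(1+\alpha)\ln T}{\Delta_{\min}^2}\big)$, and the one $\Upsilon_T\lceil\lambda(T-1)^\alpha\rceil = O(T^{\nu+\alpha})$; the remaining terms are $O(1)$ or $O(T^\nu)$. Because $\alpha = \frac{1-\nu}{2}$ makes $1-\alpha = \nu+\alpha = \frac{1+\nu}{2}$, both leading terms are $O\big(T^{\frac{1+\nu}{2}}\ln T\big)$, hence $\mathbb{E}[\mathcal{N}_k(T)] = O\big(T^{\frac{1+\nu}{2}}\ln T\big)$ for each $k$ and therefore $R_\mathcal{M}^{\text{RR-SW-UCB\#}}(T) = O\big(T^{\frac{1+\nu}{2}}\ln T\big)$.

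I expect the main obstacle to be the first step: carefully arguing that synchronized, correct estimates $\Omega_k(t)$ force a collision-free allocation that realizes exactly the optimal set $\Omega^M_*(t)$, together with the bookkeeping across the $M$-step round-robin blocks --- a breakpoint occurring in the middle of a block leaves some $\Omega_k(t)\neq\Omega^M_*(t)$ for the rest of that block and is thus already charged to $\mathcal{N}_k(T)$, so no separate accounting is needed. After that, the argument is only a union bound and an order-of-growth computation.
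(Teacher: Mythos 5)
Your proposal is correct and follows essentially the same route as the paper's (very terse) proof: observe that simultaneous correct identification of $\Omega^M_*(t)$ by all players forces a collision-free, optimal allocation and hence zero instantaneous regret, bound the bad steps by $\sum_{k=1}^M \mathcal{N}_k(T)$ via a union bound, and invoke Lemma~\ref{lemma:misidentify} with $\alpha=\frac{1-\nu}{2}$. Your write-up actually supplies details the paper omits (the distinct round-robin residues, the explicit factor $M$ per bad step, and the order computation showing $1-\alpha=\nu+\alpha=\frac{1+\nu}{2}$), all of which are correct.
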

\begin{proof}
	If all player identify $\Omega_*^M(t)$ correctly at time $t$, no expected regret is accrued. It follows from Lemma~\ref{lemma:misidentify} that $\mathcal{N}_k(T) \in O( T^{\frac{1+\nu}{2}}\ln T )$ for all $k \in \until{M}$. The total number of times that any player misidentifies $\Omega_*^M(t)$ until time $T$ can be upper bounded by  $\sum_{k=1}^M \mathcal{N}_k(T)$. Thus, we conclude the proof.
\end{proof}

\section{The SW-DLP algorithm in Abruptly-Changing Environment}\label{sec:SW-DLP}
In this section, we introduce the SW-DLP algorithm for multi-player MAB problem in abruptly-changing environment.
\subsection{The SW-DLP algorithm}
The SW-DLP algorithm combines the ideas from SW-UCB\# and DLP. Similar to SW-UCB\#, upper confidence bounds on the mean rewards are computed using a sliding window of observations as in \eqref{Sample_Mean} and \eqref{UCB}. Following the same allocation rule in DLP, at each time instant, player $k$ computes a set $A_k$ containing all $k$ largest values in the set
$$\setdef{\bar{r}_{i}^k(t-1,\alpha) +c_i^k(t-1,\alpha)}{i \in \until{N}},$$
and selects arm 
$$s_k(t)=\argmin_{i\in A_k(t)} \{ \bar{r}_{i}^k(t-1,\alpha) - c_i^k(t-1,\alpha) \}.$$
Details about the SW-DLP is shown in Algorithm \ref{algo:SW-DLP}. The parameters in the SW-DLP algorithm are the same as in the RR-SW-UCB\# algorithm. In the following, we will refer to $\bar{r}_{i}^k(t-1,\alpha) - c_i^k(t-1,\alpha) $ as the lower confidence bound on the estimate reward from arm $i$.

\IncMargin{.3em}
\begin{algorithm}[t]
	{\footnotesize
		\SetKwInOut{Input}{  Input}
		\SetKwInOut{Set}{  Set}
		\SetKwInOut{Title}{Algorithm}
		\SetKwInOut{Require}{Require}
		\SetKwInOut{Output}{Output}
		
		{\it Input, output and parameters are the same as RR-SW-UCB\#} \\
		
		\emph{\% Initialization:}\\
		\nl Set $A_k \leftarrow \emptyset$ and $t \leftarrow 1$\;
		\smallskip
		
		\nl \While{$t \leq T$}{
			\smallskip
			
			\smallskip
			
			\nl  \If{$t \in \until{N}$ \smallskip}
			{Pick arm $s_k(t)=\text{mod}(t+k-2,N)+1$;
			}
			
			\smallskip
			
			\smallskip
			
			\nl  \Else
			{Compute $A_k$ containing $k$ arms with $k$ largest values in
				$$\setdef{\bar{r}_{i}^k(t-1,\alpha) +c_i^k(t-1,\alpha)}{i \in \until{N}};$$
											
				Pick arm
				$$s_k(t)=\argmin_{i\in A_k} \{\bar{r}_{i}^k(t-1,\alpha) - c_i^k(t-1,\alpha)\};$$
			}
			\smallskip					
		}

		\caption{\textit{The SW-DLP Algorithm}}
		\label{algo:SW-DLP}}
\end{algorithm} 

\DecMargin{.3em}
\subsection{Analysis of SW-DLP}

\begin{theorem}
	For the SW-DLP algorithm and the multi-player MAB problem with $N$ arms and $M$ players in the abruptly-changing environment with the number of break points $\Upsilon_T=O(T^\nu)$, $\nu \in [0,1)$, under collision model $\mathcal{M}$, the expected cumulative group regret satisfies 
	$$R_\mathcal{M}^{\text{SW-DLP}}(T)\in O\big ( T^{\frac{1+\nu}{2}}\ln T \big)$$
\end{theorem}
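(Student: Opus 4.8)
The plan is to run the argument of Theorem~\ref{theorem:RR-SW-UCB bound} ``in reverse'': reduce the expected cumulative group regret to a sum of per-player mis-targeting counts, bound each such count by porting the Distributed Learning with Prioritization analysis of Gai~\etal~\cite{gai2014distributed} into the sliding-window regime, and finally re-use the breakpoint/epoch bookkeeping of Lemma~\ref{lemma:misidentify}. Under the prioritization rule, player $k$ aims at the $k$-th best arm $\sigma_t(k)$, and the targets $\sigma_t(1),\dots,\sigma_t(M)$ are distinct; hence if every player selects its own target at time $t$ there is no collision and the instantaneous group regret is zero, while at any other time step it is at most $M$. Writing $\mathcal N_k(T):=\sum_{t=1}^{T}\bs{1}_{\{s_k(t)\neq\sigma_t(k)\}}$, this gives $R^{\text{SW-DLP}}_{\mathcal M}(T)\le M\sum_{k=1}^{M}\expt[\mathcal N_k(T)]$, so it suffices to show $\expt[\mathcal N_k(T)]\in O(T^{(1+\nu)/2}\ln T)$ for every $k$.

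To bound $\expt[\mathcal N_k(T)]$ I would first peel off, exactly as in Step~1 of Lemma~\ref{lemma:misidentify}, the at most $\Upsilon_T\lceil\lambda(T-1)^\alpha\rceil$ time steps whose sliding window contains a breakpoint; unlike the RR-SW-UCB\# case there is no $M-1$ correction term, since $A_k$ is recomputed at every step. For the remaining ``clean'' steps $t$, the DLP argument, applied to player $k$'s window estimates $\bar r^k_i(t-1,\alpha)$ and $c^k_i(t-1,\alpha)$, shows that $s_k(t)\neq\sigma_t(k)$ forces, for some $\zeta\in\Omega^M_*(t)$ and some $i\notin\Omega^M_*(t)$, either a confidence-bound failure of the form~\eqref{b} or~\eqref{c}, or an under-sampling event $n^k_i(t-1,\alpha)<l(t,\alpha)$ with $l(t,\alpha)=\tfrac{4(1+\alpha)\ln t}{\Delta_{\min}^2}$ (the separation~\eqref{d} being ruled out once $i$ and the relevant top-$M$ arms are sufficiently sampled within the window). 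The under-sampling contribution is handled by the epoch-partition of Step~2: splitting $\until{T}$ into the $G\le\frac{T^{1-\alpha}}{\lambda(1-\alpha)}+1$ epochs of~\eqref{partition}, within each epoch every arm can be chosen while under-sampled at most $O\!\big(\tfrac{\ln T}{\Delta_{\min}^2}\big)$ times, for a total of $O\!\big(\tfrac{T^{1-\alpha}\ln T}{\Delta_{\min}^2}\big)$; the confidence-failure contribution is summed over the recomputation times via the Chernoff--Hoeffding bounds~\eqref{e}--\eqref{f} to a finite constant, as in~\eqref{eq:boundN4}. Substituting $\alpha=\tfrac{1-\nu}{2}$, so that $T^{1-\alpha}\ln T=T^{(1+\nu)/2}\ln T$ and $\Upsilon_T\lceil\lambda(T-1)^\alpha\rceil=O(T^{(1+\nu)/2})$, yields the per-player bound, and summing over $k\in\until{M}$ gives the claim.

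The main obstacle is the middle step. Unlike RR-SW-UCB\#, where with high probability every player reconstructs the \emph{same} set $\Omega^M_*(t)$, in SW-DLP the correctness of player $k$ is tied to the DLP rule (``the $k$ largest upper confidence bounds, then the smallest lower confidence bound among them''), so one must verify carefully that once player $k$'s window-based confidence intervals around the true top-$M$ means are simultaneously valid \emph{and} have width below $\Delta_{\min}$, this rule isolates $\sigma_t(k)$ --- and crucially that reaching width below $\Delta_{\min}$ needs only $O(\ln t)$ pulls of each relevant arm \emph{inside the current window of width $O(t^\alpha)$}. Making this ``inside-the-window'' guarantee quantitative is exactly where the sliding-window bookkeeping of Lemma~\ref{lemma:misidentify} (the quantities $h(t)$ and $l(t,\alpha)$ and the per-epoch in-window counting) has to be redone for the per-step recomputation schedule; since the DLP accounting is naturally stratified by the priority index $k$ (player $k$ may also sample $\sigma_t(1),\dots,\sigma_t(k-1)$ while ``confused''), I expect the cleanest route is an induction on $k$ that propagates the per-player guarantee down from the higher-priority players.
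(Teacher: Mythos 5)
Your proposal follows essentially the same route as the paper's proof: reduce the group regret to the per-player counts of times player $k$ misses its target $\sigma_t(k)$, peel off the $\Upsilon_T\lceil\lambda(T-1)^\alpha\rceil$ breakpoint-contaminated time steps, and split the remaining errors into misidentification of the top-$k$ set (handled by reusing the epoch/window bookkeeping of Lemma~\ref{lemma:misidentify}) and a wrong lower-confidence-bound pick within a correctly identified set (handled by the symmetry of the Chernoff--Hoeffding bound). The induction on the priority index $k$ that you sketch at the end is not needed in the paper's argument, which treats each player's count independently, but otherwise your decomposition and bounds match the paper's.
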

\begin{proof}
The proof is similar to the proof of Theorem~\ref{theorem:RR-SW-UCB bound} and we only present a sketch. 
Let $\theta_k(t)$ be the $k$-th best arm at time $t$. The total number of time instants that $\theta_k(t)$  is not selected by player $k$ with SW-DLP satisfies
	\begin{equation}\label{ieq:misidentify1}
		\hat{\mathcal{N}}_k \leq \Upsilon_T \lceil \lambda (T-1)^\alpha \rceil + \sum_{t=1}^{T} \bs{1}_{\{ s_k(t) \neq \theta_k(t), \, t \notin \mathcal{T}\}},
	\end{equation}
	where $\hat{\mathcal{N}}_k := \sum_{t=1}^{T} \bs{1}_{\{ s_k(t) \neq \theta_k(t)\}}$.
	
	We partition the time instants as in \eqref{partition}. Then, similarly to~\eqref{ieq:windowbound} in the proof of Lemma~\ref{lemma:misidentify}, it can be shown that
	\begin{equation}\label{ieq:windowbound2}
	\sum_{2+\phi(g-1)}^{\tilde{t}} \bs{1}_{\{ s_k(t) = i \}} \leq n_i(\tilde{t}),
	\end{equation}
	for any arm $i \in \until{N}$ and $\tilde{t} \in g\text{-th epoch}$ and $t \ne 1+\phi(g-1)$.
	
We study the event that player k does not select arm $\theta_k(t)$ at time $t$ under two scenarios: (i) $A_k \neq \Omega_*^k(t)$, and (ii) $A_k = \Omega_*^k(t)$, where $\Omega_*^k(t)$ is the set with $k$ best arms at time $t$. Then, we have
	\begin{align}\label{ieq:misidentify2}
		\sum_{t=1}^{T}   \bs{1}_{\{s_k(t)\neq \theta_k(t), \, t \notin \mathcal{T} \}}  \leq &\sum_{t=1}^{T}    \bs{1}_{\{ A_k \neq \Omega_*^k(t), \, t \notin \mathcal{T} \}} \nonumber \\
		 +  \sum_{t=1}^{T} &\bs{1}_{\{s_k(t)\neq \theta_k(t), A_k = \Omega_*^k(t), \, t \notin \mathcal{T} \}} .
	\end{align}
    
 Note that unlike  RR-SW-UCB\#, in SW-DLP, after initialization, $A_k(t)$ is computed every time instead of only at time instants $\seqdef{N+\eta M+1}{\eta \in \integer_{\geq 0}}$. However, this difference does not  change the order of the total number of times that $\Omega^k_*(t)$ is misidentified. Therefore, with the fact~\eqref{ieq:windowbound2}, it follows similarly to the proof of Lemma~\ref{lemma:misidentify} that
	\begin{equation}\label{ieq:misidentify2_1}
		\sum_{t=1}^{T} \bs{1}_{\{ A_k \neq \Omega_*^k(t) \}} \in O\big( T^{\frac{1+\nu}{2}}\ln T \big).
	\end{equation}

The Chernoff-Hoefding inequality is symmetric about the estimated mean and upper tail bound is identical to the lower tail bound. Hence, second term on the right hand side of inequality~\eqref{ieq:misidentify2} that involves selecting $s_k(t)$ using lower confidence bounds can be bounded similarly to the first term. 
Thus, we have
	\begin{equation}\label{ieq:misidentify2_2}
		\sum_{t=1}^{T} \bs{1}_{\{ s_k(t)\neq \theta_k(t),\, A_k = \Omega_*^k(t), \, t \neq \mathcal{T}\}} \in O\big( T^{\frac{1+\nu}{2}}\ln T \big).
	\end{equation}
	Substituting~\eqref{ieq:misidentify2_1} and~\eqref{ieq:misidentify2_2} into~\eqref{ieq:misidentify2}, and substituting~\eqref{ieq:misidentify2} into~\eqref{ieq:misidentify1}, we conclude that $\hat{\mathcal{N}}_k \in O\big( T^{\frac{1+\nu}{2}}\ln T \big)$.

The number of times the group does not receive a reward from arm $\theta_k(t)$ is upper bounded by the number of times player $k$ does not receive a reward from arm $\theta_k(t)$. Player $k$ does not receive a reward from arm $\theta_k(t)$ if one of the following conditions is true (i) arm $\theta_k(t)$ is not selected by player $k$, and (ii) arm $\theta_k(t)$ is selected by another player $j \neq k$.  The total number of times either one of these events occurs at any arm $\theta_k(t)$, for all $k\in\until{M}$, can be upper bounded by $\sum_{k=1}^M 2 \hat{\mathcal{N}}_k$. Since $\hat{\mathcal{N}}_k \in O\big( T^{\frac{1+\nu}{2}}\ln T \big)$ for all $k \in \until{M}$, we conclude the proof.
\end{proof}

\begin{remark}[\bit{Comparison of RR-SW-UCB\# and SW-DLP}]
In multi-player MAB algorithms, assignment of a player to a targeted arm is crucial to avoid collisions. In RR-SW-UCB\#, the indices of arms and the indices of players are employed for this assignment. A round robin policy ensures all players select $M$-best arms persistently and have an accurate estimate of the associated mean rewards. While in SW-DLP, such accurate estimation by all players is driven by the lower confidence bound based assignment of players to the arms.
	\oprocend
\end{remark}

\section{Numerical Illustration}\label{sec:numerical}
In this section, we present simulation results for RR-SW-UCB\# and SW-DLP in the abruptly changing environment. In the simulations, we consider a multi-player MAB problem with $6$ arms and $3$ players. 
We consider three different values $\{0.15,\, 0.3,\, 0.45\}$ of parameter $\nu$ that describes the number of breakpoints to show the performance the both algorithms. The breakpoints are introduce at time instants where the next element of sequence $\seqdef{\lfloor t^\nu \rfloor}{t \in \until{N}}$ is different from current element. We pick them at these time instants to make number of breakpoints $\Upsilon_t \in O(t^\nu)$ uniformly for all $t \in \until{T}$. At each break point, the mean rewards at each arm is randomly selected from $\{0.05,\, 0.22,\, 0.39,\, 0.56,\, 0.73,\, 0.90\}$. In both algorithms, we select $\lambda=12.3$.

As shown in Fig.~\ref{fig:performance}, with either algorithm, the ratio of the empirical cumulative group regret to the order of $t^{\frac{1+\nu}{2}}\ln t$ is upper bounded by a constant. The dashed lines in Fig.~\ref{fig:performance} (b) are taken directly from (a). The comparison shows that the cumulative regret of RR-SW-UCB\# is much lower than SW-DLP. However, if cost of switching between arms is considered, then the round-robin structure of RR-SW-UCB\# would incur significant cost, and in such a scenario SW-DLP might be preferred.


\begin{figure} 
	\centering
	 \subfigure[RR-SW-UCB\#]{ \label{fig:RR-UCB} 
		 \includegraphics[width=0.475\linewidth]{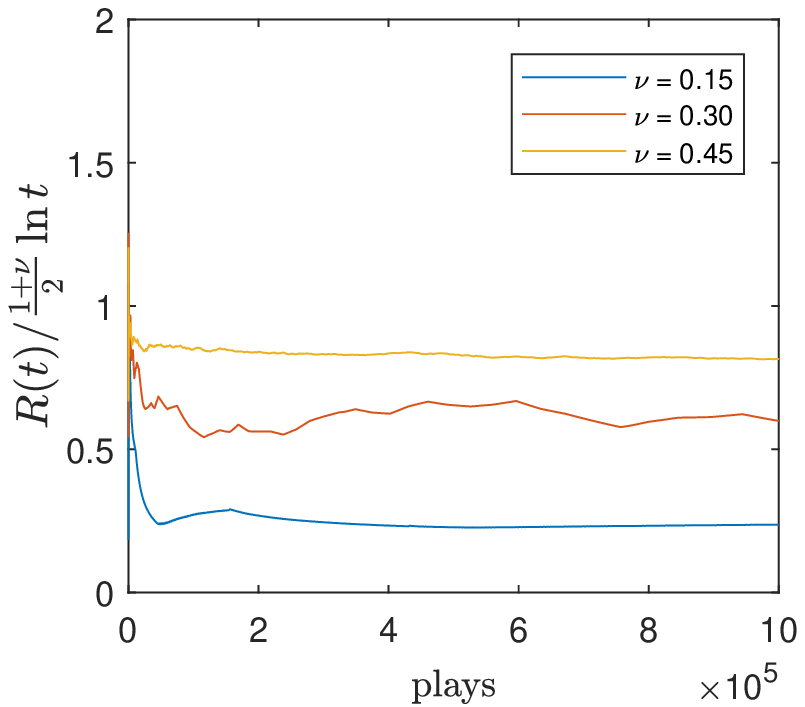}}
	 \subfigure[SW-DLP and RR-SW-UCB\#]{ \label{fig:SW-DLP} 
		 \includegraphics[width=0.475\linewidth]{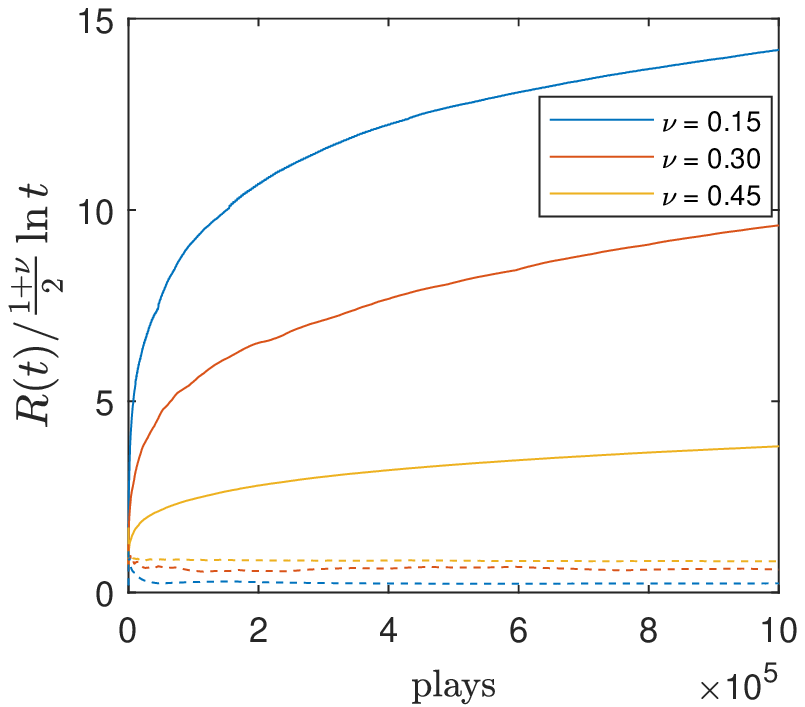}} \caption{The performance the two algorithms in the abruptly changing environment. (a)  The solid lines show the performance of RR-SW-UCB\# (b) The solid lines and dashed lines show the performance of SW-DLP and RR-SW-UCB\#, respectively.} \label{fig:performance} 
\end{figure}

\section{Conclusion and Future Directions}\label{sec:conclusions}
We studied the multi-player stochastic MAB problem in abruptly environment under a collision model in which a player receives a reward by selecting an arm if it is the only player to select that arm. We designed two novel algorithms, RR-SW-UCB\# and SW-DPL to solve this problem.  We analyzed these algorithms and characterized their performance in terms of expected cumulative group regret. In particular, we showed that these algorithms incur sublinear expected cumulative regret, i.e., the time average of the regret asymptotically converges to zero. 

There are several possible avenues for future research. In this paper, we focused on an abruptly changing environment. Extension of this work to other classes of non-stationary environments such as slowly-varying environment is of considerable interest. Another avenue of future research is extension of these algorithms to the multi-player Markovian MAB problem. 


 
\footnotesize 

\bibliographystyle{IEEEtran}
\bibliography{IEEEabrv,bandits,surveillance,mybib}

\begin{thebibliography}{10}
\providecommand{\url}[1]{#1}
\csname url@samestyle\endcsname
\providecommand{\newblock}{\relax}
\providecommand{\bibinfo}[2]{#2}
\providecommand{\BIBentrySTDinterwordspacing}{\spaceskip=0pt\relax}
\providecommand{\BIBentryALTinterwordstretchfactor}{4}
\providecommand{\BIBentryALTinterwordspacing}{\spaceskip=\fontdimen2\font plus
\BIBentryALTinterwordstretchfactor\fontdimen3\font minus
  \fontdimen4\font\relax}
\providecommand{\BIBforeignlanguage}[2]{{%
\expandafter\ifx\csname l@#1\endcsname\relax
\typeout{** WARNING: IEEEtran.bst: No hyphenation pattern has been}%
\typeout{** loaded for the language `#1'. Using the pattern for}%
\typeout{** the default language instead.}%
\else
\language=\csname l@#1\endcsname
\fi
#2}}
\providecommand{\BIBdecl}{\relax}
\BIBdecl

\bibitem{JRK-AK-PT:78}
J.~R. Krebs, A.~Kacelnik, and P.~Taylor, ``Test of optimal sampling by foraging
  great tits,'' \emph{Nature}, vol. 275, no. 5675, pp. 27--31, 1978.

\bibitem{VS-PR-NEL:13}
V.~Srivastava, P.~Reverdy, and N.~E. Leonard, ``On optimal foraging and
  multi-armed bandits,'' in \emph{Proceedings of the 51st Annual Allerton
  Conference on Communication, Control, and Computing}, Monticello, IL, USA,
  2013, pp. 494--499.

\bibitem{VS-PR-NEL:14}
------, ``Surveillance in an abruptly changing world via multiarmed bandits,''
  in \emph{IEEE Conference on Decision and Control}, 2014, pp. 692--697.

\bibitem{MYC-JL-FSH:13}
M.~Y. Cheung, J.~Leighton, and F.~S. Hover, ``Autonomous mobile acoustic relay
  positioning as a multi-armed bandit with switching costs,'' in \emph{IEEE/RSJ
  International Conference on Intelligent Robots and Systems}, Tokyo, Japan,
  November 2013, pp. 3368--3373.

\bibitem{anandkumar2011distributed}
A.~Anandkumar, N.~Michael, A.~K. Tang, and A.~Swami, ``Distributed algorithms
  for learning and cognitive medium access with logarithmic regret,''
  \emph{IEEE Journal on Selected Areas in Communications}, vol.~29, no.~4, pp.
  731--745, 2011.

\bibitem{auer2002nonstochastic}
P.~Auer, N.~Cesa-Bianchi, Y.~Freund, and R.~E. Schapire, ``The nonstochastic
  multiarmed bandit problem,'' \emph{SIAM Journal on Computing}, vol.~32,
  no.~1, pp. 48--77, 2002.

\bibitem{besbes2014optimal}
O.~Besbes, Y.~Gur, and A.~Zeevi, ``Optimal exploration-exploitation in a
  multi-armed-bandit problem with non-stationary rewards,'' \emph{arXiv
  preprint arXiv:1405.3316}, 2014.

\bibitem{AG-EM:08}
A.~Garivier and E.~Moulines, ``On upper-confidence bound policies for
  non-stationary bandit problems,'' \emph{arXiv preprint arXiv:0805.3415},
  2008.

\bibitem{liu2017change}
F.~Liu, J.~Lee, and N.~Shroff, ``A change-detection based framework for
  piecewise-stationary multi-armed bandit problem,'' \emph{arXiv preprint
  arXiv:1711.03539}, 2017.

\bibitem{LW-VS:17i}
L.~Wei and V.~Srivastava, ``On abruptly-changing and slowly-varying multiarmed
  bandit problems,'' in \emph{American Control Conference}, Milwaukee, WI, Jun.
  2018, to appear, arXiv:1802.08380.

\bibitem{Multi-playerMABIID}
V.~Anantharam, P.~Varaiya, and J.~Walrand, ``Asymptotically efficient
  allocation rules for the multiarmed bandit problem with multiple plays-part
  i: Iid rewards,'' \emph{IEEE Transactions on Automatic Control}, vol.~32,
  no.~11, pp. 968--976, 1987.

\bibitem{Multi-playerMABMarkovian}
------, ``Asymptotically efficient allocation rules for the multiarmed bandit
  problem with multiple plays-part ii: Markovian rewards,'' \emph{IEEE
  Transactions on Automatic Control}, vol.~32, no.~11, pp. 977--982, 1987.

\bibitem{gai2014distributed}
Y.~Gai and B.~Krishnamachari, ``Distributed stochastic online learning policies
  for opportunistic spectrum access,'' \emph{IEEE Transactions on Signal
  Processing}, vol.~62, no.~23, pp. 6184--6193, 2014.

\bibitem{liu2010distributed}
K.~Liu and Q.~Zhao, ``Distributed learning in multi-armed bandit with multiple
  players,'' \emph{IEEE Transactions on Signal Processing}, vol.~58, no.~11,
  pp. 5667--5681, 2010.

\bibitem{kalathil2014decentralized}
D.~Kalathil, N.~Nayyar, and R.~Jain, ``Decentralized learning for multiplayer
  multiarmed bandits,'' \emph{IEEE Transactions on Information Theory},
  vol.~60, no.~4, pp. 2331--2345, 2014.

\bibitem{NN-DK-RJ:16}
N.~Nayyar, D.~Kalathil, and R.~Jain, ``On regret-optimal learning in
  decentralized multi-player multi-armed bandits,'' \emph{IEEE Transactions on
  Control of Network Systems}, vol.~5, no.~1, pp. 597--606, 2016.

\bibitem{shahrampour_rakhlin_jadbabaie_2017}
S.~Shahrampour, A.~Rakhlin, and A.~Jadbabaie, ``Multi-armed bandits in
  multi-agent networks,'' \emph{2017 IEEE International Conference on
  Acoustics, Speech and Signal Processing (ICASSP)}, 2017.

\bibitem{arxiv:LandgrenSL15}
P.~Landgren, V.~Srivastava, and N.~E. Leonard, ``On distributed cooperative
  decision-making in multiarmed bandits,'' in \emph{European Control
  Conference}, Aalborg, Denmark, Jun. 2016, pp. 243--248.

\bibitem{DBLP:journals/corr/LandgrenSL16}
------, ``Distributed cooperative decision-making in multiarmed bandits:
  Frequentist and {B}ayesian algorithms,'' in \emph{IEEE Conference on Decision
  and Control}, Las Vegas, NV, USA, Dec. 2016, pp. 167--172.

\bibitem{PA-NCB-PF:02}
P.~Auer, N.~Cesa-Bianchi, and P.~Fischer, ``Finite-time analysis of the
  multiarmed bandit problem,'' \emph{Machine Learning}, vol.~47, no.~2, pp.
  235--256, 2002.

\bibitem{WH:63}
W.~Hoeffding, ``Probability inequalities for sums of bounded random
  variables,'' \emph{Journal of the American Statistical Association}, vol.~58,
  no. 301, pp. 13--30, 1963.

\end{thebibliography}

\end{document}